\newtheorem{theorem}{Theorem}
\begin{document}
%

\title{Coping with Large Traffic Volumes in Schedule-Driven Traffic Signal Control}

\author{Hsu-Chieh Hu\\
Department of Electrical and Computer Engineering \\
Carnegie Mellon University, Pittsburgh, PA, USA\\
\texttt{hsuchieh@andrew.cmu.edu}
\And
Stephen F. Smith\\
The Robotics Institute,\\
 Carnegie Mellon University, Pittsburgh, PA, USA\\
\texttt{sfs@cs.cmu.edu}
 }

\maketitle
\begin{abstract}

Recent work in decentralized, schedule-driven traffic control has demonstrated the ability to significantly improve traffic flow efficiency in complex urban road networks. However, in situations where vehicle volumes increase to the point that the physical capacity of a road network reaches or exceeds saturation, it has been observed that the effectiveness of a schedule-driven approach begins to degrade, leading to progressively higher network congestion. In essence, the traffic control problem becomes less of a scheduling problem and more of a queue management problem in this circumstance. In this paper we propose a composite approach to real-time traffic control that uses sensed information on queue lengths to influence scheduling decisions and gracefully shift the signal control strategy to queue management in high volume/high congestion settings. Specifically, queue-length information is used to establish weights for the sensed vehicle clusters that must be scheduled through a given intersection at any point, and hence bias the wait time minimization calculation. To compute these weights, we develop a model in which successive movement phases are viewed as different states of an Ising model, and parameters quantify strength of interactions. To ensure scalability, queue information is only exchanged between direct neighbors and the asynchronous nature of local intersection scheduling is preserved. We demonstrate the potential of the approach through microscopic traffic simulation of a real-world road network, showing a $60\%$ reduction in average wait times over the baseline schedule-driven approach in heavy traffic scenarios. We also report initial field test results, which show the ability to reduce queues during heavy traffic periods.

\end{abstract} 

\section{Introduction}
Traffic congestion in urban areas is a serious problem, resulting in significant economic cost to drivers in terms of wasted time and fuel, as well as substantial environmental cost due to increased vehicle emissions. The problem is also quite challenging, requiring tight coordination among traffic signals across road networks with multiple, conflicting dominant flows that typically shift through the day. While it is generally believed that improvements to traffic signal control offer the most promise for reducing congestion, there is little consensus on how to solve this network-level coordination problem. Offline generation of fixed, coordinated timing plans optimizes for the average case (which can be quite different than actual traffic flows at any time) and also immediately start to ``age". However, conventional wisdom has also tended to argue against the viability of real-time adaptive signal control in complex urban environments. Centralized coordination mechanisms that adjust signal timing parameters (e.g., cycle time, green time splits and offsets) dynamically based on sensed traffic have had the most success \cite{Robertson1991,Lowrie1992,heung2005coordinated,gettman2007data}. But these approaches are inherently susceptible to scalability issues, and are generally designed for gradual adjustment (e.g., every 5-10 minutes) as opposed to real time adaptive signal control. On the other hand, decentralized, online planning approaches \cite{sen1997controlled,gartner2002optimized,shelby2001design,cai2009adaptive,jonsson2011scaling} have 
historically had difficulty in planning with a sufficiently long horizon to provide an effective basis for coordination.

Recent work in decentralized, online planning, however, has developed a schedule-driven approach to real-time traffic control that overcomes this horizon problem \cite{Xie2012,xie2012schedule}. Key to this approach is a formulation of the core intersection scheduling problem as a single machine scheduling problem, where input jobs are clusters of vehicles in close proximity to each other  (i.e., approaching platoons, queues). This aggregate representation allows plans to be efficiently generated with order-of-magnitude longer horizons than was previously possible, and hence enables network-level coordination through exchange of schedule information. Under this approach, the goal is to allocate green time to different signal  {\emph phases}, over time, where a signal phase is a compatible traffic movement pattern (e.g., East-West traffic flow). Each intersection asynchronously computes a schedule of green phases that minimizes the cumulative delay through the intersection of all approaching vehicles, and then communicates expected outflows to its downstream neighbors as it begins to execute its schedule. Scalability is ensured by the fact that intersections only communicate with their direct neighbors. However, since the planning horizon is extended, outflow information can propagate to non-local neighbors. Results obtained in an initial field test showed a 25\% reduction in travel times, a 40\% reduction in wait times and 30\% reduction in number of stops through the network \cite{smith2013smart}, and the system currently controls a network of 50 intersections in the East End area of Pittsburgh PA.


Despite these results, however, there are circumstances when the effectiveness of such a schedule-driven approach to coordination can break down. In particular in situations of high congestion, where the number of vehicles approaches the 
physical capacity of interconnected queues in the network, the traffic control problem becomes less of a scheduling problem (e.g., involving just a single cluster spanning the planning horizon along each intersection approach in the extreme case), and more of a problem of managing queues. In this paper, we propose a composite approach to real-time traffic control that addresses this issue, by using sensed information on queue lengths to influence scheduling decisions and gracefully shift the signal control strategy to queue management in high volume/high congestion settings.

To stabilize queues in a network (i.e., to prevent unbounded growth of queues), the vehicle clusters associated with longer queues should be serviced first. Within the above intersection scheduling framework, one straightforward way of achieving this behavior is to assign higher weights (i.e., higher priority) to these input clusters and compute phase schedules that minimize weighted cumulative delay. To balance the emphasis placed on queue management as a function of network saturation, we propose to use queue-length information (both local to the intersection and non-local from neighbors) to establish the weights. In situations where queue lengths are small, cluster priority will continue to be a function of the cluster size (number of vehicles) as before; however as the network becomes saturated and queues become longer, clusters associated with longer queues will begin to dominate cluster priority. To ensure scalability, queue information is only exchanged between direct neighbors and the asynchronous nature of local intersection scheduling is preserved. 

To derive an appropriate set of weights, the signal phases of a given intersection are viewed as different states of an Ising model \cite{suzuki2013chaotic}  and the probabilistic distribution of this model, whose parameters quantify transitions between phases and strength of interactions in terms of queue-length information, is calculated. However, computing the exact distribution is a hard problem \cite{cipra2000ising}. Hence we turn to approximation through mean field methods originating in statistical physics and the graphical model literature \cite{wainwright2008graphical}. The marginal distribution derived for each phase is then used as the weight of that phase's clusters.
We show formally that the proposed composite approach prevents queues from increasing without bound and therefore achieves network stability. We also present simulation results on a real-world traffic network that demonstrate the ability of our approach to effectively integrate queue management into schedule-driven traffic control. The approach is shown to reduce average waiting times by $60\%$ in heavy traffic scenarios. Finally, we report the results from some initial experiments in the field, which verify the ability to reduce queues during heavy traffic periods.

The remainder of the paper is organized as follows. We first summarize the baseline schedule-driven approach and how we propose to extend it. Next, the mechanisms necessary to achieve decentralized queue management are discussed. Then, we review related work in queue management approaches to network congestion. Finally, an empirical analysis of the composite approach  is presented, and conclusions are drawn.

\section{Schedule-Driven Traffic Control}

As indicated above, the key to the single machine scheduling problem formulation of the schedule-driven approach of \cite{Xie2012,xie2012schedule} is an aggregate representation of traffic flows as sequences of clusters $c$ over the planning (or prediction) horizon. The clusters become the jobs that must be sequenced through the intersection (the single machine). The cluster sequences provide short-term variability of traffic flows at each intersection and preserve the non-uniform nature of real-time flows. Specifically, the input is an ordered sequence of $(|c|, arr, dep)$ triples reflecting each approaching or queued vehicle on each road segment, where  $|c|$, $arr$ and $dep$ are number of vehicles, arrival time and departure time respectively. The vehicles are sensed through the intersection's detectors.

Once the cluster sequences of each approaching road segments are represented, each cluster is viewed as a non-divisible job and a forward-recursion  dynamic programming search is performed to generate a phase schedule that minimizes the cumulative delay of all clusters in the current prediction horizon. The process constructs an optimal sequence of clusters that maintains the ordering of clusters along each road segment, and each time a phase change is implied by the sequence, then a delay corresponding to the intersection's yellow/all-red changeover time constraints is inserted.  If the resulting schedule is found to violate the maximum green time constraints for any phase (introduced to ensure fairness), then the first offending cluster in the schedule is split, and the problem is re-solved. 

More precisely, the delay that each cluster contributes to the cumulative delay $\sum_c d(c)$ is defined as 
\begin{equation}
d(c) = |c| \cdot (ast - arr(c)),
\end{equation} 
where $ast$ is the actual start time determined by the process through a forward recursion. The optimal sequence (schedule) is the one that incurs minimal delay for all vehicles.

As mentioned earlier,  our hypothesis is that the effectiveness of this schedule-driven process degrades as congestion increases near saturation, due to the fact that it becomes more and more difficult to accurately predict when incoming clusters are going to arrive at the intersection when queues become large. Note that a queueing network is considered to be \textit{stable} if the queues do not tend to increase without bound. To boost the performance of this schedule-driven process in a network that is experiencing high congestion, we introduce a weight into this delay computation. The basic idea is to bias the scheduling search more toward stabilizing local queues (both at the local intersection and at its neighbor intersections) as the level of local congestion increases. To measure the level of congestion, we rely on queue-length information associated with various phases. To provide a low complexity scheme for queue management, we propose to weight each cluster of a given phase equally. The delay incurred by each cluster is thus rewritten as
\begin{equation}
d(c) = |c| \cdot (ast - arr(c)) \cdot w(p),
\end{equation} 
where $w(p) $ is the weight assigned to the phase $p$ that cluster $c$ belongs to. The important question then becomes: how to set the weights for competing phases.

\section{Queue Management Using Mean Field Methods}
 
 In this section, we introduce a decentralized method for calculating the weights to assign to each phase of a given intersection, so that its queues are locally stabilized in coordination with the queues at neighboring intersections. In brief, we propose a special Ising model in which the phases of a given traffic signal are viewed as different states, and their respective queue lengths are its parameters. Then, the intractable marginal distribution of the phases in each intersection is approximated by appealing to the use of mean field methods. We use the marginal distribution as the weight (priority) of each phase.
 
 \subsection{Weight in a Form of Probability Function}
To stabilize queues in the network, the clusters associated with longer queues should be served with higher priority. Hence, the weight function representing priority should be a function of queue length. The goal is to find a proper function that can reflect such queue dynamics and is suitable for integration with schedule-driven traffic control. Let us assume that the sum of weights of all phases is equal to one,  
\begin{equation}
\sum_{p = 1}^{P} w(Q_p) = 1,
\end{equation}
where $Q_p$ is the queue length at the corresponding phase $p$ and $P$ is number of phases for a single intersection.

A probability function is a reasonable choice as the phase of each intersection is viewed as a random variable. A larger weight for a specific phase implies a higher probability of that phase occurring. Furthermore, the probability function is a continuous function of queue length matching the continuous variability of vehicular cluster size. 
In the following sections, we apply techniques from statistical physics and graphical models to derive this probability function.

\subsection{Boltzmann Distribution for Traffic Signal}
The green and red lights of a traffic signal can be viewed as two states of an Ising particle spin. Moreover, intersections in urban environments are interconnected and interact with each other. In statistical physics,  a Boltzmann distribution is a probability distribution that assigns probability that an interacting system is in a certain configuration of states described by an energy function of states. A standard form of Boltzmann distribution that has binary states and energy function $E(\cdot)$ is defined as follows:
\begin{equation}
p(\bm{\sigma}; \theta) = \frac{1}{Z( \theta)} e^{-\beta E(\bm{\sigma}; \theta)},
 \end{equation}
 where $\bm{\sigma} \in \{1,0\}^n$ are state variables, and $T = \frac{1}{\beta}$, and $Z( \theta)$ are temperature and normalization constants. We assume that $\beta$ is assimilated into the $\theta$ parameters. Consider a graph $G = (V,E)$, where $s \in V$ and $t \in V$ are two adjacent nodes (intersections) in $G$. Then the energy function incorporating interaction takes the following form,
\begin{equation}
E(\bm{\sigma}; \theta) = -\sum_{(s,t)\in E} \theta_{st} \sigma_s\sigma_t - \sum_{s \in V} \theta_s \sigma_s,
\end{equation}
where the first term defines the interaction between two intersections and the second term specifies the external field of each intersection. With this form of energy function, a Boltzmann distribution can also be expressed by the following general form of exponential family:
\begin{align}
&p(\bm{\sigma}; \theta) = \exp(\langle\sigma, \theta\rangle) - A(\theta)),
\end{align}
where $\langle \cdot \rangle$ is Euclidean inner product of two vectors and $\log Z(\theta) = A(\theta)$. After getting the distribution, the marginal probability of phase $p$ at intersection $i$ is calculated, and we set the weight for phase schedule generation to this probability, 
\begin{equation}
W(Q_{ip}) = P(\sigma_i = p).
\end{equation}

\subsection{Queue-based Energy Function}
In an interconnected queueing network, the strength of interaction $\theta_{st}$ and external potential $\theta_{s}$ are related to queue length, which corresponds to different phases (states). Suppose that we have two phases: $\{1\}$ represents East-West phase, and  $\{0\}$ represents North-South phase. Taking the East-West phase as an example, the external field $\theta_{s} >0 $ is the pressure that "pushes" vehicles along with East-West direction.   The stronger the external field, the greater the tendency of the vehicles to keep moving East-West.   $\theta_{s} <0$ is analogous to a repulsive field that prevents vehicles from approaching further.  Moreover,  the queues of neighbor intersections contributing to East-West phase will be summed up together and used to measure the interaction strength $\theta_{st}$. Specifically, $\theta_{st}$ is a measure of repulsion or attraction faced by intersections as they synchronize with another intersection. The sign of $\theta_{st}$ corresponding to North-South phase is opposite to that of the East-West phase, since the traffic flows in both directions compete with each other. The energy function of  a two-phased signalized transportation network is given by


\begin{align}
&E(\bm\sigma; \theta) =  -\sum_{(s,t)\in E} \theta_{st} \sigma_s\sigma_t - \sum_{s \in V} \theta_s \sigma_s\\
\nonumber & =  -\sum_{(s,t)\in E} L(Q_{t \rightarrow s} , Q_{s\rightarrow t})   \sigma_s\sigma_t - \sum_{s \in V} (Q_{s,h} - Q_{s,v}) \sigma_s,
\end{align} 
where $L(Q_{t \rightarrow s} , Q_{s\rightarrow t})$ is the interaction strength and depends on which phase these queues are contributing to. It is defined as  
\begin{align}
   L(Q_{t \rightarrow s} , Q_{s\rightarrow t}) = 
\begin{cases}
    Q_{t \rightarrow s} - Q_{s\rightarrow t},& \text{if } (s,t)\in E_h\\
    Q_{s\rightarrow t} - Q_{t \rightarrow s}, & \text{if } (s,t)\in E_v,
\end{cases}
\end{align}
where $h$,$v$ denote the East-West and North-South directions. $E_h$ and $E_v$ are the sets of the East-West and North-South road segments respectively. Specifically, the queues served during the East-West phase contribute to positive terms of the energy function, while  those served during the North-South phase contribute to negative terms. The intersection of the signalized transportation network is depicted in Figure~\ref{intersection}.

\subsection{Calculation of the Weights}
For a distribution associated with a complex graph, especially with loops that typify grid networks, it is intractable to perform probabilistic inference, e.g., compute the exact marginal distribution of all random variables. The variational approach to the probabilistic inference involves converting the inference problem into an optimization problem, by approximating the feasible set, and solving the relaxed problem.

A Boltzmann distribution is one of the exponential families \cite{wainwright2008graphical}. An appealing feature of the exponential family is that moments of the distribution are obtained by the derivatives of log normalization function $A(\theta)$. For a given tractable subgraph $F$, mean field methods are based on optimizing over the subset of realizable mean parameters $\mu$ that can be obtained by the subset of exponential family distribution. With the subset $\mathcal{M}_F(G)$ of $\mu$ and the corresponding conjugate dual function $A^*_F(\mu)$, the $A(\theta)$ can be computed by solving the following optimization problem

\begin{figure}[!htbp]
\centering
\includegraphics[scale = 0.37]{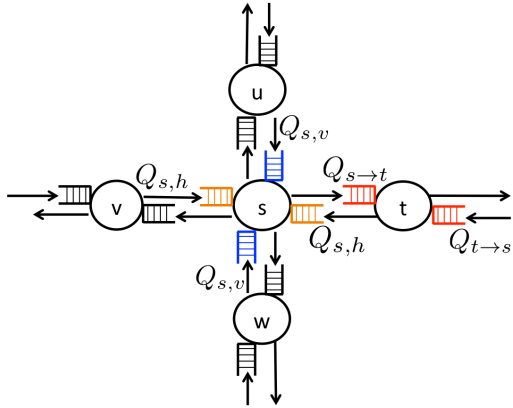}
\caption{The two-phased signalized intersection associated with a two-state Ising model}
\label{intersection}
\end{figure}

\begin{align}
 & A(\theta) = \max_{\mu\in \mathcal{M}_F(G)} \langle \mu, \theta \rangle  - A^*_F(\mu)
\end{align}
and the resulting mean parameter is 
\begin{align}
\mu_s = E_{\theta}[\sigma_s] = P(\sigma_s = 1;\theta)
\end{align}

In this work, the approximation is based on choosing product distribution 
\begin{equation}
p(\sigma_1, \sigma_2, \cdots, \sigma_n; \theta) = \prod_{s\in V} p(\sigma_s; \theta)
\end{equation}
 as the tractable approximation. It is also referred to as the naive mean field approach. According to this approximation, the optimization problem is rewritten as

\begin{align}
 \nonumber A(\theta) =&\max_{\mu\in[0,1]}    \sum_{s\in V}\sum_{t \in \mathcal{N}_s} L(Q_{t\rightarrow s}, Q_{s\rightarrow t})   \mu_s\mu_t \\
 \nonumber &+ \sum_{s \in V} (Q_{s,h} - Q_{s,v}) \mu_s\\
 & - \sum_{s \in V} [\mu_s \log (\mu_s) - (1-\mu_s) \log(1-\mu_s)]
\end{align}

Solving the problem yields a specific form of the mean parameter update 
\begin{align}
\nonumber \mu_s \leftarrow & \{1+ \exp[ -(Q_{s,h} - Q_{s,v}) - \sum_{t \in \mathcal{N}_s} L(Q_{t\rightarrow s}, Q_{s\rightarrow t}) \mu_t ]\}^{-1}\\
\nonumber  = &S\big( Q_{s,h} + \sum_{t \in \mathcal{N}_{s,h}} (Q_{t\rightarrow s}  - Q_{s\rightarrow t}) \mu_t \\
&- Q_{s,v} -\sum_{t \in \mathcal{N}_{s,v}} (Q_{t\rightarrow s}  - Q_{s\rightarrow t}) \mu_t  \big),
\label{update}
\end{align}
where $S(x)$ is sigmoid function $\frac{1}{1+\exp(-x)}$. $\mathcal{N}_{s,h}$ and $\mathcal{N}_{s,v}$ are sets of neighbor intersections corresponding to East-West and North-South phases. From (\ref{update}), the two terms in the sigmoid function denote effective queues for each phase, which are defined as
\begin{align}
 \nonumber&\hat{Q}_{s,h} = Q_{s,h} + \sum_{t \in \mathcal{N}_{s,h}} (Q_{t\rightarrow s}  - Q_{s\rightarrow t}) \mu_t \\
 &\hat{Q}_{s,v} = Q_{s,v} + \sum_{t \in \mathcal{N}_{s,v}} (Q_{t\rightarrow s}  - Q_{s\rightarrow t}) \mu_t 
 \label{hatq}
\end{align}

With the effective queues, the marginal distribution is expressed concisely as
\begin{align}
\nonumber&P(\sigma_s = 1) = S\big( \hat{Q}_{s,h}  - \hat{Q}_{s,v} )\\
&P(\sigma_s = 0)  = 1-S\big( \hat{Q}_{s,h}  - \hat{Q}_{s,v} ).
\label{probex}
\end{align}
Note that the marginal distribution is a Bernoulli distribution whose parameter is only related to the difference between the effective queues. In other words, the weight function used in scheduling is a function of queue difference.

\subsection{Theoretical Guarantees of Stability}
In this section, we prove that by applying this weight function to clusters, an upper bound on the expected queue length is achieved. According to Little's law \cite{little1961proof}, the delay is bounded as well. 

The weight formulas (\ref{hatq}) and (\ref{probex}) specify that each actual queue has an effective queue associated with it. Upstream queue length can be viewed as a prediction of future traffic flow.  Most importantly, the contribution of the downstream queue prevents the intersection from spillover effects \cite{daganzo1998queue} by reducing the effective queue to a negative value. It is equivalent to decreasing the service rate when the road segment has insufficient capacity. Building upon these results as well as \cite{tassiulas1992stability}, we state the following property of our algorithm.
\begin{theorem}
Consider a network has $n$ queues with arrival rates $\pi_1,\cdots, \pi_n$. Under the proposed schedule-driven traffic control, expected queue length is bounded
\begin{equation}
 \limsup_t E[\sum_{i = 1}^{n}   Q_i(t)] \leq \frac{n^2 }{2\epsilon}
 \label{bound}
 \end{equation}
 if for any queues the arrival rates satisfy $\pi_i \leq s_i-\epsilon$ with $\epsilon > 0$ and service rate $s_i$.
\end{theorem}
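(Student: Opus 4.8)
The plan is to run the standard Lyapunov-drift argument for queueing networks, in the spirit of \cite{tassiulas1992stability}. First I would write the queue evolution in discrete time as a Lindley-type recursion $Q_i(t+1) = \max\{Q_i(t) - s_i(t),0\} + a_i(t)$, where $a_i(t)$ is the number of arrivals to queue $i$ in slot $t$ (with mean $\pi_i$) and $s_i(t) \le s_i$ is the service actually granted to queue $i$ by the controller, subject to the structural constraint that only one phase is green at each intersection per slot. As the Lyapunov function I would take the quadratic $L(t) = \tfrac{1}{2}\sum_{i=1}^{n} Q_i(t)^2$.

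Second, I would bound the one-step conditional drift. Using $(\max\{q-s,0\}+a)^2 \le q^2 + s^2 + a^2 - 2q(s-a)$ and summing over $i$, one obtains
\begin{equation}
E[L(t+1)-L(t)\mid Q(t)] \;\le\; B \;-\; \sum_{i=1}^{n} Q_i(t)\, E\big[s_i(t)-a_i(t)\mid Q(t)\big],
\end{equation}
where $B$ collects the (bounded) second-moment terms of the per-slot arrivals and services; with the natural normalization this is at most $n^2/2$, which is where the constant in (\ref{bound}) comes from.

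Third --- and this is the crux --- I would show that the proposed controller acts as a max-weight (back-pressure) policy. Concretely, the phase schedule minimizing the weighted cumulative delay $\sum_c |c|\,(ast - arr(c))\,w(p)$ with the mean-field weights of (\ref{probex}) allocates service so as to (approximately) maximize $\sum_i Q_i(t)\,s_i(t)$, or more precisely the corresponding quantity in the effective-queue coordinates $\hat Q_{s,h},\hat Q_{s,v}$ of (\ref{hatq}): inflating $w(p)$ raises the marginal cost of delaying phase-$p$ clusters, so the optimum serves heavier effective queues sooner, and the downstream term in (\ref{hatq}) supplies exactly the pressure differential that prevents spillover. I would make this precise by comparing the optimal weighted delay against that of a fixed randomized phase allocation. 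Granting this property, the slack hypothesis $\pi_i \le s_i - \epsilon$ places the arrival vector strictly inside the stability region, so a stationary randomized admissible policy with service rates $\pi_i+\epsilon$ exists; since the controller dominates it in the inner-product sense, $\sum_i Q_i(t)\,E[s_i(t)-a_i(t)\mid Q(t)] \ge \epsilon \sum_i Q_i(t)$, whence $E[L(t+1)-L(t)\mid Q(t)] \le \tfrac{n^2}{2} - \epsilon\sum_i Q_i(t)$.

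Finally, I would telescope this inequality over $t=0,\dots,T-1$, divide by $T$, discard the nonnegative $E[L(T)]$ term, and let $T\to\infty$, obtaining $\limsup_T \tfrac1T\sum_{t=0}^{T-1} E[\sum_i Q_i(t)] \le \tfrac{n^2}{2\epsilon}$; the stated bound on $\limsup_t E[\sum_i Q_i(t)]$ then follows by the usual Foster--Lyapunov completion. I expect the main obstacle to be step three: rigorously bridging ``minimizes weighted cumulative delay over a finite horizon with approximate mean-field weights'' and ``performs a single-slot max-weight service allocation in effective-queue coordinates.'' The key claim to nail down is that the mean-field approximation error together with the horizon/clustering discretization perturbs the weights only by a bounded amount, so that it can inflate the constant $B$ but cannot erase the negative-drift term $-\epsilon\sum_i Q_i(t)$.
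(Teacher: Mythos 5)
Your proposal follows the same underlying strategy as the paper's proof --- a Lyapunov--Foster drift argument in the style of \cite{tassiulas1992stability} --- but the two are organized quite differently, and yours is considerably more explicit. The paper does not set up a quadratic Lyapunov function, bound the one-slot conditional drift, compare against a stationary randomized policy, and telescope; instead it jumps directly to an inequality of the form
\begin{equation*}
\sum_{i=1}^{n} Q_i(t) \;\le\; \frac{n^2}{2\epsilon} \;+\; \frac{n}{2\epsilon}\sum_{i:\,Q_i(t)<Q_{d(i)}(t)} \bigl(\pi_i(t)-s_i(t)\bigr)\bigl[Q_i(t)-Q_{d(i)}(t)\bigr],
\end{equation*}
obtained by splitting the queues according to the sign of the backpressure differential $Q_i(t)-Q_{d(i)}(t)$ to the downstream queue, and then argues that the residual term is harmless because the mean-field weights in (\ref{hatq}) drive $s_i(t)$ toward zero precisely when $Q_i(t)-Q_{d(i)}(t)$ is negative. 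Your route buys a clean, auditable chain of inequalities and makes the constant $n^2/(2\epsilon)$ traceable to the second-moment term $B$ and the slack $\epsilon$; the paper's route buys brevity and foregrounds the role of the downstream (spillover-prevention) term in (\ref{hatq}).

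The crux you flag in your third step --- rigorously showing that minimizing weighted cumulative delay over a finite horizon with the approximate mean-field weights of (\ref{probex}) implements a max-weight allocation in effective-queue coordinates --- is exactly the step the paper also fails to establish. Its entire treatment of that point is the one-sentence assertion that $s_i(t)$ is ``close to zero'' when the differential is negative, with no quantification of the mean-field approximation error, the horizon/clustering discretization, or the effect of the max-green fairness constraint. So your proposal is, if anything, the more honest account of what a complete proof requires. Your closing perturbation argument (bounded weight error inflates $B$ but cannot erase the $-\epsilon\sum_i Q_i(t)$ drift) has the right shape; to finish it you would also need to account for the fact that service is allocated per phase rather than per queue, so the inner-product comparison must range over the feasible phase-activation set rather than being done coordinatewise.
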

\begin{proof}
To establish the upper bound  for any scheduling algorithm, it is sufficient to consider a multi-hop network model. Let $Q_{d(i)}$ designate the downstream queue of $Q_{i}$. Using Lyapunov-Foster theory and 
separating all queues into two cases: 1) $Q_i(t) < Q_{d(i)}(t)$ and 2) $Q_i(t)  \geq Q_{d(i)}(t)$, we have
\begin{align}
\nonumber&\sum_{i = 1}^{n}   Q_i(t) \leq \frac{n^2 }{2\epsilon} \\
&+ \Big( \frac{n}{2\epsilon}\sum_{i:Q_i(t) < Q_{d(i)}(t) } (\pi_i(t) - s_i(t)) [Q_i(t) - Q_{d(i)}(t)] \Big)
\label{bound}
\end{align}
In (\ref{bound}), $s_i(t)$ will be close to zero when $Q_i(t) - Q_{d(i)}(t)$ contributes a negative value according to (\ref{hatq}). Therefore, the bound can be rewritten as 
\begin{equation*}
E[\sum_{i = 1}^{n}   Q_i(t)] < \frac{n^2 }{2\epsilon}
\end{equation*}

%
\end{proof}

\section{Coordination Mechanisms}
In this section, we propose a message-passing protocol for coordinating use of queue-length information based on the mean field method. Basically, the protocol aims to balance the queues of different phases through exchanging queue and mean parameter information. To deal with practical considerations, two modifications to accommodate multiple phases and turning proportions are first proposed.    

\subsection{Multiple Phases}
Until now we have assumed that each traffic signal has two phases respectively. Practically speaking, urban intersections frequently have more than two phases, e.g, left turn phases at a four-way intersection, to provide more degrees of freedom to vehicles. According to the previous sections, the final result of weight formula is a softmax function, with only two exponential terms in denominator. We can generalize the Ising model to adapt to multiple phases by defining  parameters associating with multiple phases.   
If (\ref{probex}) is rewritten as
\begin{align}
\nonumber& S\big( \hat{Q}_{s,h}  - \hat{Q}_{s,v} )= \frac{ \exp(\hat{Q}_{s,h})}{ \exp(\hat{Q}_{s,h}) + \exp( \hat{Q}_{s,v})}
\end{align}
and 
\begin{align}
\nonumber&1- S\big( \hat{Q}_{s,h}  - \hat{Q}_{s,v} )= \frac{ \exp(\hat{Q}_{s,v})}{ \exp(\hat{Q}_{s,h}) + \exp( \hat{Q}_{s,v})},
\end{align}
it shows that the probability of a specific phase is proportional to exponential function of the corresponding effective queue. Therefore, we can derive the probability of phase $p$ in $P$-phased signals
\begin{align}
&P(\sigma_s = p) = \frac{ \exp(\hat{Q}_{s,p})}{ \sum_{i= 1}^P\exp(\hat{Q}_{s,i}) }.
\label{multi}
\end{align}

\subsection{Consideration of Turning Proportions}

Considering turning proportions at each intersection is important for improving performance of adaptive traffic signal systems. In the baseline schedule-driven approach, the turning movement proportion is estimated by taking moving averages of traffic flow rate for different phases respectively. The lane detectors detect the numbers of turning vehicles, compute the moving average and then normalize these flow rates. After getting these proportions, the scheduled flow is able to reflect the realistic traffic flow by proportioning the add-on flow and evacuated flow. For a grid-like network, the three input queues (east, north, and west) of the upstream intersection multiplied by the corresponding turning proportions are summed up together to obtain the upstream effective queue $Q_{u\rightarrow s}$ to downstream input queue (north). Similarly, the three proportioned output queues of three downstream intersections (east, south, and west) constitute the downstream effective queue $Q_{s\rightarrow d}$. 
\begin{align}
\nonumber &Q_{u \rightarrow s} = \sum_{k = 1}^{E} \zeta_k Q_{u\rightarrow s}^{(k)}\\
& Q_{s\rightarrow d} = \sum_{k = 1}^{E} \eta_k Q_{s\rightarrow d}^{(k)},
\end{align}
where $\zeta_k$ and $\eta_k$ are the turning proportions of input and output queues and $E$ is the number of input or output flows.

\subsection{Message-passing Protocol}
We now introduce a practical protocol to manage the queues of schedule-driven traffic control. The protocol realizes the solution (\ref{hatq}) and (\ref{probex}) of the mean field method in a fully decentralized manner. Moreover, it takes the multiple phase case and turning proportions into consideration. We assume that each intersection knows its neighbor intersections and is able to communicate with them. First, the scheduling agent collects its local queue-length information. Once the queue-length information and the calculated mean parameter are received from the neighbor intersections, the agent then computes its mean parameters and applies them as its cluster weights for generating the phase schedule. The protocol is summarized as follows

\scalebox{0.9}{
\begin{minipage}{\columnwidth}

\begin{algorithm}[H]
\floatname{algorithm}{Message-passing Protocol}
\renewcommand{\thealgorithm}{}
\caption{Steps that intersection $s$ communicates with each other to stabilize queues of the network}
\label{protocol1}
\begin{algorithmic}[1]
\State For each intersection $s$, let $Q_{s,p}$ denote estimated local queue length of phase $p$.
\Statex
\State Send proportioned $Q_{s\rightarrow down}$ to downstream neighbors and non-proportioned $Q_{s\rightarrow up}$ to upstream neighbors.  
\Statex
\State Receive proportioned $Q_{up\rightarrow s}$ and $\mu_{up}$ from upstream intersections.
\Statex
\State  Receive $Q_{down\rightarrow s}$ and $\mu_{down}$ from 3 downstream connecting intersections. Calculate proportioned $Q_{down\rightarrow s}$ with estimated turning proportions.
\Statex
\State Update  $\hat{Q}_{s,p}$ and $P(\sigma_s = p)= \mu_s$ based on (\ref{hatq}) and (\ref{probex}). Share $\mu_s$ with neighbors.
\end{algorithmic}
\label{protocol}
\end{algorithm} 

\end{minipage}
}

This protocol only requires communication with  direct neighbors and queue-length information. It is more robust than methods that use estimated arrival rates, since the latter are vulnerable to traffic variability. Furthermore, the approach is fully distributed to ensure scalability. 


\section{Performance Evaluation}
To evaluate our approach, we simulate performance on a real world network with $2$-way, multiple lane, and multi-directional traffic flow. The network model is based on the Baum-Centre neighborhood of Pittsburgh, Pennsylvania as shown in Figure~\ref{surtracmap}. The network consists mainly of 2-phased intersections, with just three 3-phased intersections. All simulation runs were carried out according to a realistic traffic pattern from late afternoon through PM rush (4-6 PM). The traffic pattern ramps up volumes over the simulation interval as follows:  (0-30mins: 236 cars/hr, 30min-1hr: 354 cars/hr, 1hr-2hrs: 528 cars/hr ). This simulation model presents a complex practical application for verifying the effectiveness of the proposed approach.

\begin{figure}[!htbp]
\centering
\includegraphics[scale = 0.3]{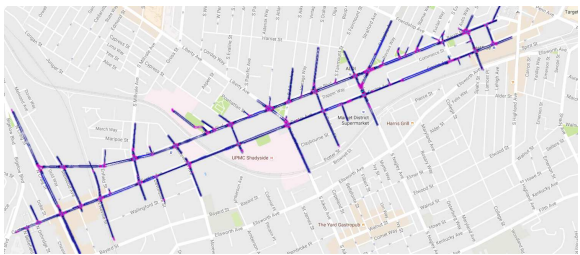}
\caption{Map of the 24 intersections in the Baum-Centre neighborhood of Pittsburgh, Pennsylvania}
\label{surtracmap}
\end{figure}

The simulation model was developed in VISSIM, a commercial microscopic traffic simulation software package that is well known in the transportation research community. To assess the performance boost provided by integrating the proposed queue management scheme with the original schedule-driven approach, we measure the average waiting time of all vehicles over 5 runs and take the performance of the original schedule-driven traffic control system as our benchmark (referred to as the baseline system below).

Table~\ref{resultstable2} shows the results of combining schedule-driven traffic control with two queue management schemes. The first scheme only uses local queue-length information to weight vehicle clusters. It is equivalent to the first term of right-hand side of (\ref{hatq}). The second scheme uses (\ref{hatq}) to derive weights. Compared to the benchmark, use of just local queue-length information reduces delay by $15\%$ and halves the standard deviation. The gain comes from the controlling the lengths of queues and decreasing cluster size. With the queue-length information and weights (mean parameters) from neighbors, delay is reduced by $24\%$. Use of this additional information achieves better performance since it avoids the spillover effect \cite{daganzo1998queue} by stopping vehicles further away from entry into a road segment with insufficient capacity. In addition, when upstream intersections send more traffic, the corresponding phase in the downstream intersection wills have longer green time to deal with it. 


\begin{table}[tp]
\centering
\caption{Avg. delay of Baum Centre Model}
\scalebox{0.9}{
\begin{tabular}{  l| c | c  } 
  & Mean  (s) &  Std. deviation    \\  \hline
 Benchmark & 124.27 & 103.87  \\  \hline
Local queue & 106.72 & 83.56 \\ \hline
Local + Upstream& 95.31 & 72.10 \\
Downstream queue & & 
\end{tabular}
}
\label{resultstable2}
\end{table}

\begin{table*}[tp]
  \centering
 \resizebox{1.15\columnwidth}{!}{%
  \begin{tabular}{*{9}{c}}
   \toprule
      \multirow{2}{*}{}& \multicolumn{4}{c}{ Queue length (no.)}& \multicolumn{4}{c}{Cluster size (msec)}  \\
    
   \cmidrule(l){2-5}  \cmidrule(l){6-9}    & \multicolumn{2}{c}{Benchmark} &  \multicolumn{2}{c}{Local+up+down}   &\multicolumn{2}{c}{ Benchmark} & \multicolumn{2}{c}{Local+up+down}  \\
   & mean  & std. & mean & std. & mean &std. & mean & std.\\
    \midrule
   Baum-Aiken & 33.60& 25.40& 6.42&7.18 & 24640& 30218 & 7174& 9601 \\
   Baum-Craig & 3.40&3.13 &2.98 & 2.91&7279 & 13448&4818& 5907\\ 
   Baum-Cypress & 3.02& 3.90 & 2.79 &3.41 &5940 &8115 &5412& 7252 \\
   Baum-Graham & 47.15 & 58.59 & 2.60 & 3.13&23775 &35934 &4680& 5331 \\
   Baum-Millvale & 10.80 & 11.72 & 6.72 & 5.30 & 8446 &12951 & 5618 & 6043\\	
   Baum-Liberty & 18.96 & 12.84 &15.97 & 11.72&14071 & 21687 &7423&10857\\
   Baum-Melwood & 10.88 & 13.97 &5.75 &5.44 &9193 & 15175 &5708 &6278\\
   Baum-Negley & 21.01 & 21.96 &7.10 & 5.79 &18593 & 29677 & 6142& 7762\\
   Baum-Roup & 38.32 & 48.48 &4.18 &4.53 &26550 & 42747&5494&6414 \\
   Centre-Aiken & 32.97 & 43.07&3.30 & 3.00& 9886 & 20864 & 3949& 6003\\
   Centre-Craig & 8.09 & 14.87 &5.75 & 7.22 &4453 &8387& 3566& 5223 \\
   Centre-Cypress & 2.68 &2.74 &2.83 &2.78 &6446 & 12992 &4252& 5115 \\
   Centre-Millvale & 6.17 &13.61 &2.23 & 2.36 &7238 & 12848 &4588& 5813\\
   Centre-Morewood & 4.89 & 4.72 &4.89 & 4.54 &6054 &8231 &5228& 6410\\
   Centre-Negley & 5.55 & 5.14 &5.68 &  4.14 &10306 & 24704 &3881& 4066 \\
   Centre-Neville & 5.80 & 13.29 &2.73 & 4.78 &7974 & 15888 &4515& 6522\\
    \bottomrule
  \end{tabular}
  }
   \caption{Queue length and cluster size of the intersections under high demand traffic}
  \label{demandtable}
\end{table*}


Since our approach focuses on highly congested scenarios, knowing the distribution of delay to vehicles helps us verify the effectiveness of the proposed queue management protocol. As shown in Figure~\ref{cdf}, the queue management scheme that applies local queue-length information only shifts the CDF leftward and provides $20\%$ improvement over benchmark for $90\%$ of the vehicles. Furthermore, with the addition of neighbor queue-length information, the improvement could be up to $30\%$ for $90\%$ of the vehicles. It should be noted that the queue management reduces average delay by $70s$. However, for the congested vehicles the reduction is more than $100s$. In other words, minimizing queue length is especially effective for high congestion scenarios. Likewise, we can also observe a similar phenomenon if we categorize traffic demand into three different groups: high (528 cars/hr), medium (354 cars/hr), and low (236 cars/hr).  Figure~\ref{three}, shows that improvement in delay can reach $60\%$ for the high traffic demand case. On the other hand, the performance of the medium and low traffic are comparable with the benchmark. Under medium and low traffic demand conditions, schedule-driven traffic control dominates performance, and the benefit of queue management is marginal. 

\begin{figure}[!htbp]
\centering
\includegraphics[scale = 0.35]{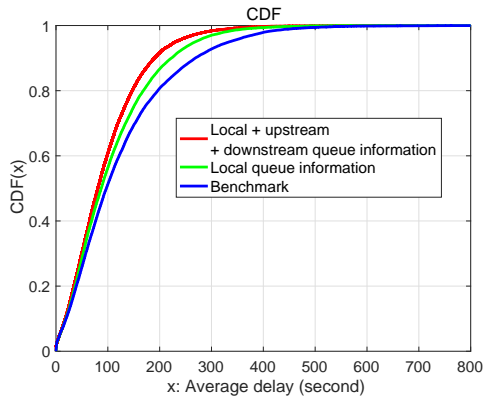}
\caption{The CDF of delay on two different queue management schemes.}
\label{cdf}
\end{figure}

Table~\ref{demandtable} provides another perspective on performance, using measurements of queue length and cluster size.
We list the measurements of all intersections whose average queue length is greater than $2$ vehicles in the high traffic demand scenarios. The average queue length and cluster size of the composite queue management scheme is less than those of the benchmark, and thus delay is lower according to Little's law \cite{little1961proof} of queueing theory. For the heaviest loaded intersections, e.g., Baum-Aiken and Centre-Aiken, the reduction of queue length could be up to $10$ times. In addition, the proposed queue management also leads to smaller variance of cluster size, which can be seen as the service time of queueing systems.  In the queueing literature, the variance of the service time plays a major role in queueing performance. For example, it is well-known that, for a $M/G/1$ queue, the variance in service time solely determines the average queueing delay if the average service time is kept the same. Similarly, even for $G/G/1$ queues, more `variable' service time leads to larger queueing delay. Although our system is far more complicated than these standard queueing systems, we observe that reducing variance of cluster size is still beneficial to scheduling systems.  

\begin{figure}[!htbp]
\centering
\includegraphics[scale = 0.28]{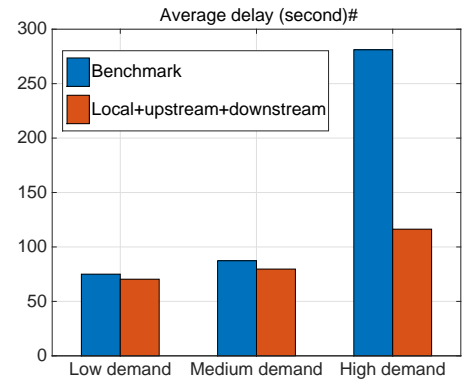}
\caption{Comparison of high, medium, low traffic demand.}
\label{three}
\end{figure}

In addition to overall performance, we differentiate the performance between the multinomial extension (\ref{multi}) of the multiple phases and the $2$-phased cases. The average delay and queue length between $2$-phased and $3$-phased intersections are plotted in Figure~\ref{multiple}. Figure~\ref{multiple}(a) shows that the multiple phase formulation improves delay by balancing the three different phases, and the performance is comparable to the original $2$-phased model. Also, the queue length is nearly half compared to the benchmark as shown in the Figure~\ref{multiple}(b). Since the realistic urban network is comprised of  intersections with different numbers of phases, this extension makes the original two-state Ising model fit in the real world. 

\section{Initial Field Experiment}
To further evaluate the performance potential of our proposed composite adaptive signal control system, an experiment was conducted in the field, using the $23$ intersections that make up the Baum-Centre neighborhood corridors. This experiment focused exclusively on traffic control performance during the PM Rush hour (4:00PM - 6:00PM), the heaviest traffic flow period of the day. It entailed first running the baseline schedule-driven approach to control these intersections for 3 consecutive weekdays (July 12 to 14, 2016) and then installing and running the proposed approach to control the same intersections for the PM Rush period on 3 consecutive weekdays of the following week (July 19 to 21, 2016). During both sets of days, queue-length and cluster size information was collected and computed from intersection sensor data.


 The queue-length information and cluster size were collected through use of intersection sensor data, which contains estimated queue-length and cluster size measurement. An analysis of those data showed that the proposed approach reduced the average queue length by $15\%$ and reduced the average cluster size by $12\%$ during the PM rush period, providing further evidence of the effectiveness of the proposed approach. Figures~\ref{init} shows the averaged queue and cluster information for both corridors and each corridor individually. 

\begin{figure}[!htbp]
\centering
\subfigure[Avg. delay]{\includegraphics[width=41.5mm, height=32mm]{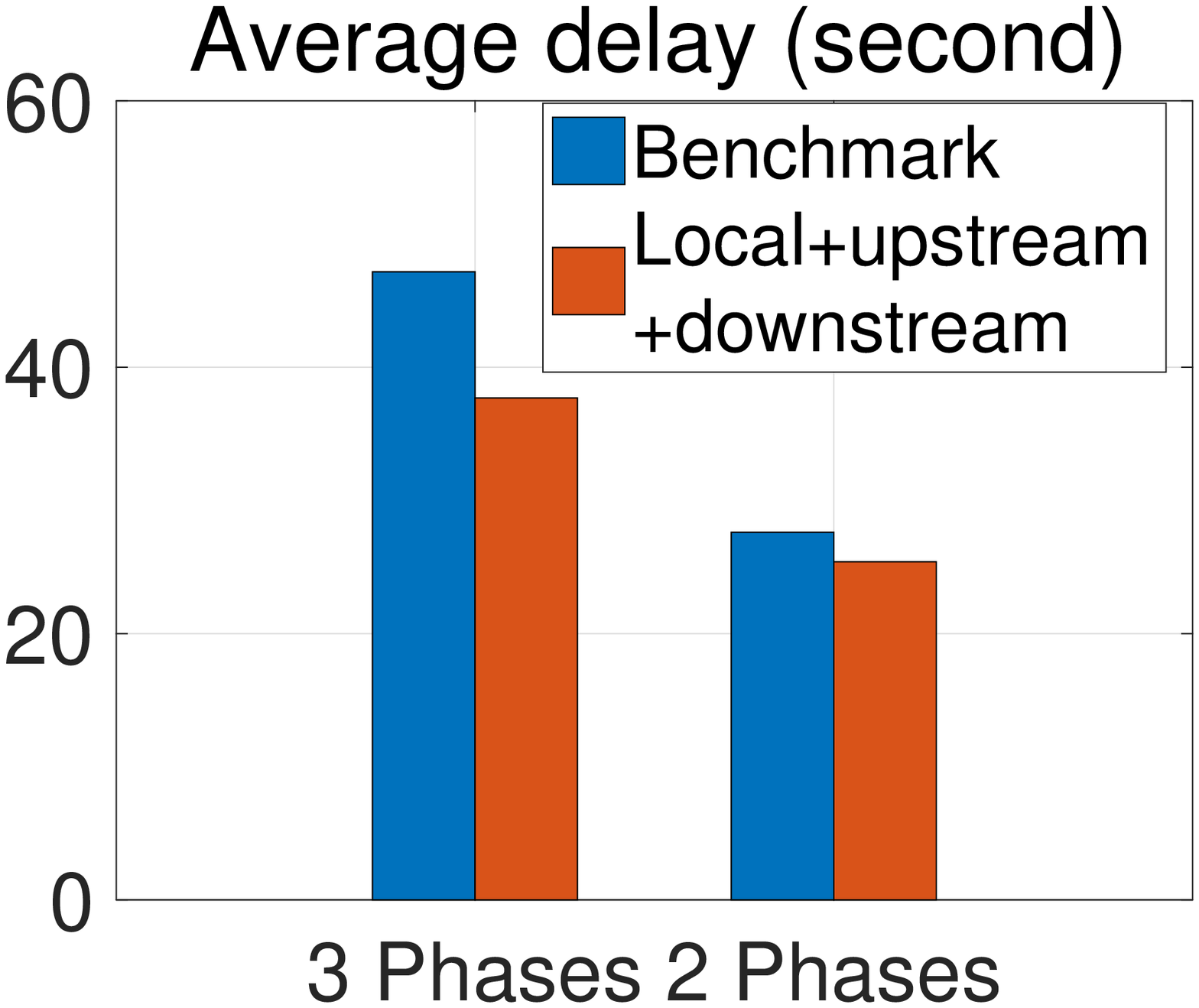}} 
\subfigure[Avg. queue]{\includegraphics[width=41.5mm, height=32mm]{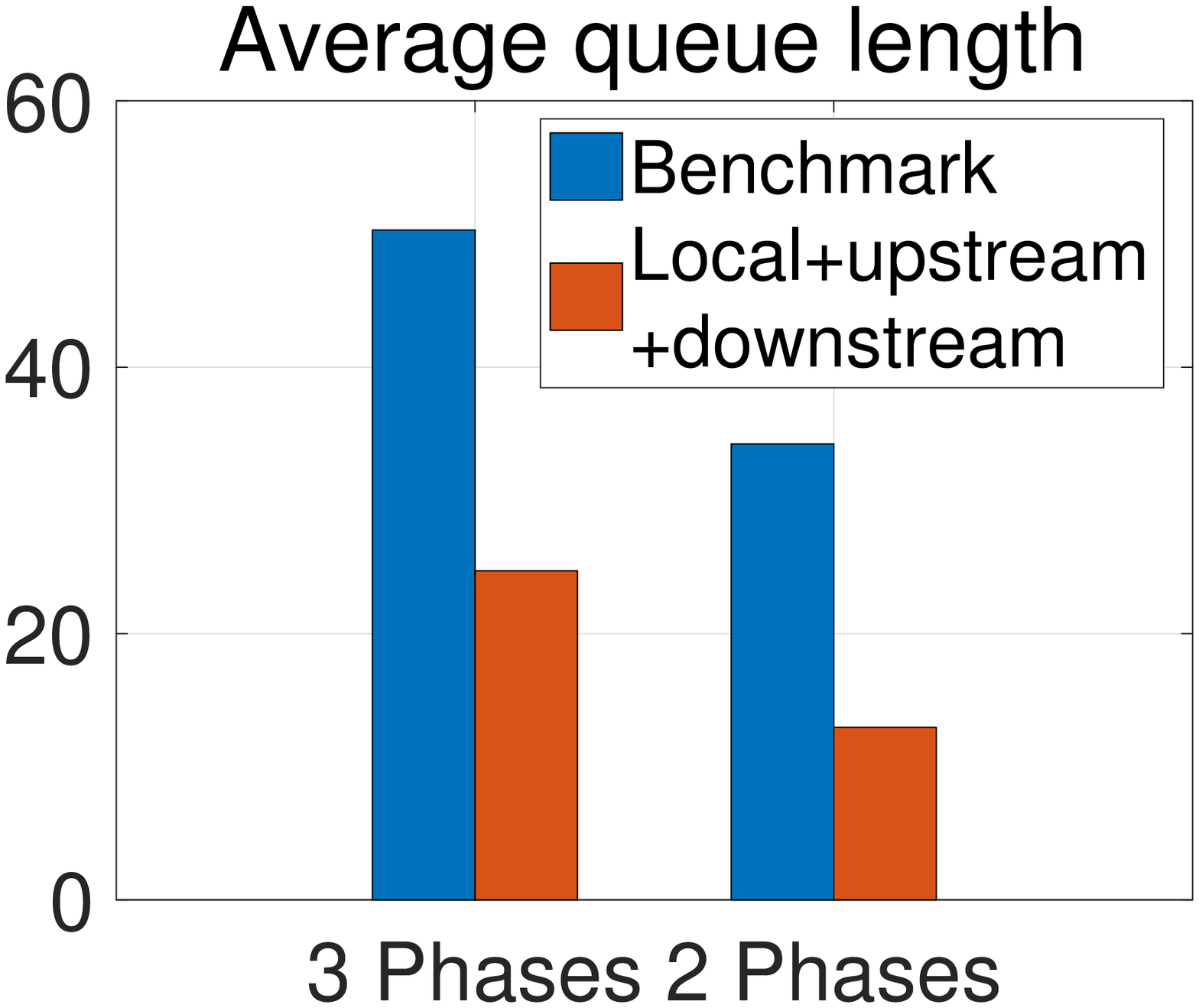}}
\caption{The comparison of average delay and queue length between $2$-phased and $3$-phased intersections}
\label{multiple}
\end{figure}


\section{Related Work}
Techniques that restrict queues from increasing in a network is a problem of broad interest in computer network design and manufacturing. Researchers from computer science, operation research, and communication engineering have been persistently working toward development of queue management techniques, which is able to stabilize queues and preserve performance. In the field of scheduling, however, this and related problems are rarely discussed. One exception is the recent research in coupling queueing theory concepts with finite capacity scheduling \cite{terekhov2014queueing,terekhov2014integrating}.

In \cite{tassiulas1992stability}, a ``backpressure'' policy called the max-weight algorithm was introduced to maximize the throughput of a network through stabilization of queues. This approach has been applied mainly to communication networks and but also recently to transportation networks \cite{wongpiromsarn2012distributed}.  However, there are two complications with applying backpressure to the problem of real-time traffic control. First, although backpressure is maximizing network throughput, the practical version \cite{tassiulas1998linear} still seems to induce large average delay \cite{shah2011hardness}, which is undesirable in the case of traffic networks. Second, the approach does not consider non-local influence from neighbors and is thus susceptible to myopic decisions. Actually, the proposed approach can be seen as a soft version of backpressure policy, so that the stability the queues is guaranteed. Furthermore, delay performance is not sacrificed due to the scheduling problem formulation. 

%
%

\begin{figure}[!htbp]
\centering
\subfigure[Avg. delay]{\includegraphics[width=41.5mm, height=32mm]{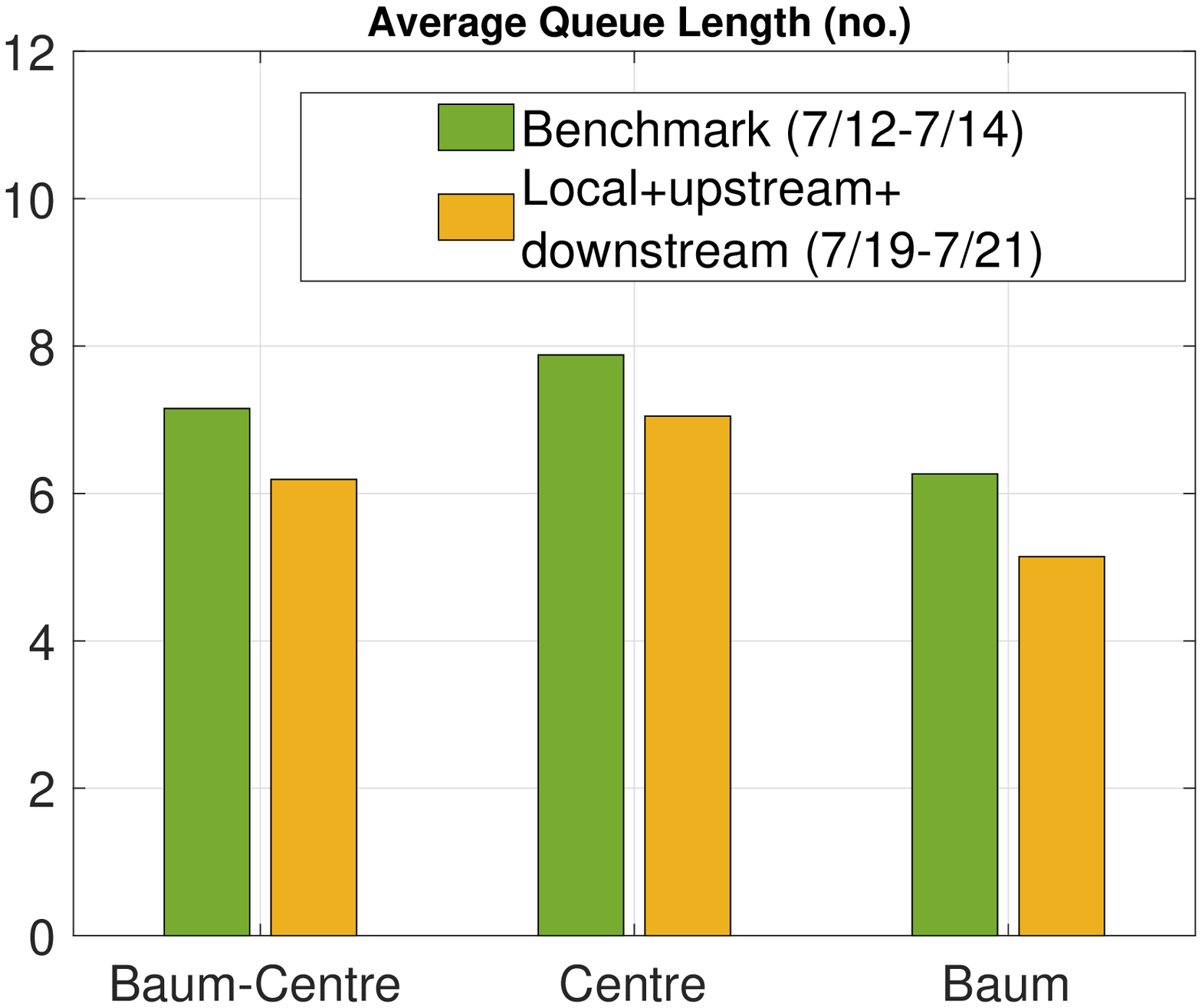}} 
\subfigure[Avg. queue]{\includegraphics[width=41.5mm, height=32mm]{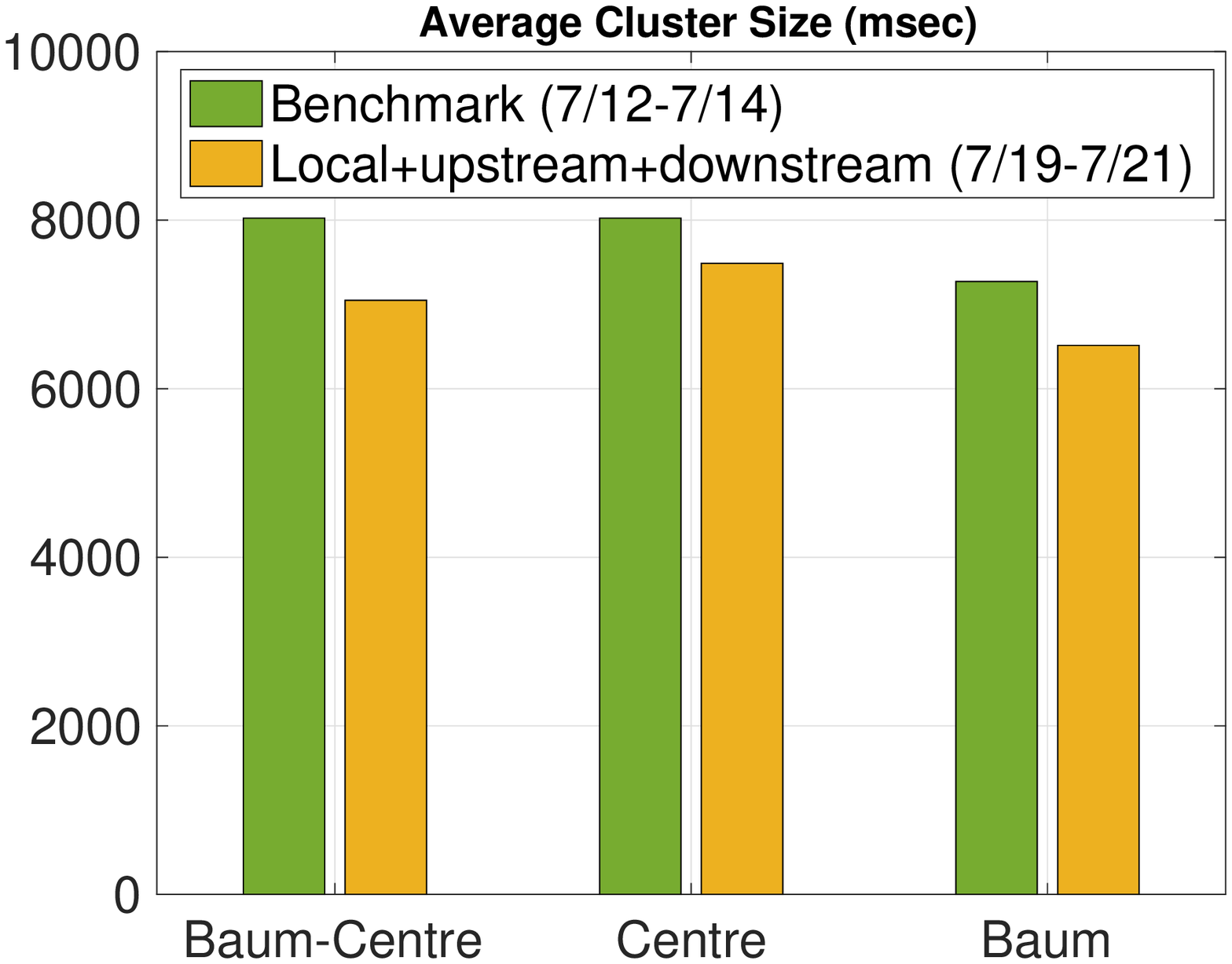}}
\caption{Initial field comparison on Baum-Centre corridors - average queue length and cluster size during PM rush hour.}
\label{init}
\end{figure}

Analytical network models based on queueing theory \cite{osorio2009surrogate} are another way to approach the problem of network congestion. By solving a large-scale optimization problem \cite{osorio2013simulation}, signal timing plans can be derived for an urban road network. As mentioned earlier, however, such an approach optimizes from a snapshot of average traffic flow, which is typically quite different than actual traffic flow. Traffic flow prediction work \cite{yin2002urban} has also concentrated in recent years on dealing with urban congestion. However, the interconnected queues increase the difficulty of predicting arrival rates of coming vehicles.

To our knowledge, statistical physics or graphical models have never been used to solve the network congestion problem.  Previous work has focused on using statistical physics to study traffic flow dynamics, although they have received less attention than other approaches. Recently, these approaches have witnessed a resurgence. For instance, \cite{jerath2015dynamic} recently studied the effect of driver algorithms on traffic flow. \cite{suzuki2013chaotic} applied the Ising model to study chaotic dynamics and serve as a starting point for considering statistical mechanics of traffic signals. In the context of self-organized traffic flow, however, these techniques have been less discussed due to the lack of suitable model.

\section{Conclusions}
In this work, we described a queue management scheme designed to gracefully boost the performance of schedule-driven traffic control as the level of congestion increases. The approach stabilizes the queues through exchange and use of queue-length information. This information is used to establish weights for clusters appearing in a given phase, using an Ising model of the intersection and ultimately applying mean field methods to compute the weights.
A decentralized message-passing protocol was developed and the composite system was evaluated on a simulation model of a real-world urban road network. Results showed that the composite queue management enhanced scheme improves average delay overall in comparison to the baseline schedule-driven traffic control approach, and that solutions provide substantial gain in highly congested scenarios. Future work will focus on the design of how to decompose the transportation network into several independent sets for approaching the optimality of network scheduling.

{\bf{Acknowledgement} }:This research was funded in part by the University Transportation Center on Technologies for Safe and Efficient Transportation at Carnegie Mellon University and the CMU Robotics Institute.


\bibliographystyle{aaai}
\bibliography{queue}
\end{document}